\newtheorem{theorem}{Theorem}
\title{TernaryLLM: Ternarized Large Language Model}
\author{
Tianqi Chen$^{1,2}$,
Zhe Li$^{3}$,
Weixiang Xu$^1$,
Zeyu Zhu$^{1}$, \\
\textbf{Dong Li}$^{3}$, 
\textbf{Lu Tian}$^3$,
\textbf{Emad Barsoum}$^3$, \\ 
\textbf{Peisong Wang}$^{1,2,4}$,
\textbf{Jian Cheng}$^{* 1,2,4,5}$ \\
$^1$Institute of Automation, Chinese Academy of Sciences \\
$^2$School of Artificial Intelligence, University of Chinese Academy of Sciences\\ 
$^3$Advanced Micro Devices, Inc., Beijing, China \\
$^4$AIRIA  $^5$Maicro.ai  \\
\texttt{\{chentianqi2023,xuweixiang2018,zhuzeyu2021\}@ia.ac.cn}, \\
\texttt{\{Z.Li,d.li,lu.tian,Emad.Barsoum\}@amd.com}, \\ 
\texttt{\{peisong.wang,jcheng\}@nlpr.ia.ac.cn}
}
\begin{document}

\maketitle
\begin{abstract}
Large language models (LLMs) have achieved remarkable performance on Natural Language Processing (NLP) tasks, but they are hindered by high computational costs and memory requirements. Ternarization, an extreme form of quantization, offers a solution by reducing memory usage and enabling energy-efficient floating-point additions. However, applying ternarization to LLMs faces challenges stemming from outliers in both weights and activations.
In this work, observing asymmetric outliers and non-zero means in weights, we introduce Dual Learnable Ternarization (DLT), which enables both scales and shifts to be learnable. We also propose Outlier-Friendly Feature Knowledge Distillation (OFF) to recover the information lost in extremely low-bit quantization. The proposed OFF can incorporate semantic information and is insensitive to outliers. At the core of OFF is maximizing the mutual information between features in ternarized and floating-point models using cosine similarity. Extensive experiments demonstrate that our TernaryLLM surpasses previous low-bit quantization methods on the standard text generation and zero-shot benchmarks for different LLM families. Specifically, for one of the most powerful open-source models, LLaMA-3, our approach (W1.58A16) outperforms the previous state-of-the-art method (W2A16) by 5.8 in terms of perplexity on C4 and by 8.2\% in terms of average accuracy on zero-shot tasks.
\end{abstract}

\section{Introduction}
\label{introduction}
Large language models (LLMs) \cite{LLaMa, OPT} have demonstrated impressive performance across various language tasks. Although LLMs' representative ability improves as the parameters scale exponentially, the enormous parameters pose significant challenges on memory footprint and low latency inference \cite{AWQ, Smoothquant}. Therefore, resolving the above problems for the practical deployment of LLMs is an emergency. 
% Much research has focused on this problem by various methods, including pruning \cite{LLM-pruner}, quantization \cite{outlier_surpression_plus,Ominiquant}, and Knowledge Distillation (KD) \cite{Distllstepbystep, LLM-pruner,QUIP}.

As a compression method without modifying the model architecture, network quantization has shown promising effectiveness in reducing both memory requirements and inference latency. 
In the context of LLMs, quantization can be categorized into weight-only and weight-activation quantization.
While weight-activation quantization can leverage faster integer computations instead of floating-point computations, prior studies have identified significant outliers in activations, posing challenges for successfully quantizing activations to lower bit precision (e.g., 4 bits).
Consequently, weight-only quantization becomes a better trade-off between efficiency and accuracy, as the main bottleneck of deploying LLMs is memory bandwidth, which usually preserves more accuracy.
Most recent works have focused on weight-only quantization, successfully quantizing weights to 4 and 3 bits (even 2 bits) \cite{Ominiquant, QUIP}.
However,  weight-only quantization has to dequantize the low-bit weights to floating-point in real-time before performing multiplication with floating-point activation. 
% Therefore, while weight-only quantization primarily benefits from reduced memory access, it suffers from slower computation processes.

% In addressing the above limitations of weight-only quantization, ternarization provides advantages of both reduced memory consumption and energy-saving floating-point additions \cite{Energy}. Previous investigations into ternarized neural networks, as demonstrated in works such as \cite{TernaryBERT, RTN, STTN}, have primarily focused on convolutional neural networks and smaller encoder-based transformers like BERT. 
% % In this study, our attention shifts to applying ternarization to Large Language Models (LLMs). 
% However, employing previous ternarization methods directly in LLMs encounters new challenges due to outliers present in both weights and activations.

Fortunately, by replacing floating-point multiplications with energy-saving floating-point additions, ternarization can solve the above problems. However, existing ternarized works are dedicated to convolutional neural networks and smaller encoder-only transformers like BERT, which can not be directly applied to LLMs due to the following challenges:
Firstly, we observe asymmetric outliers and non-zero means in weights, indicating that previous symmetric ternarization methods are suboptimal. Secondly, extreme low-bit quantization leads to severe information loss in pretrained LLMs, including a narrowed feature representation range, loss of prominence in dominant channels, and disruption of the semantic clustering of related words. When trying to recover the lost information from the floating-point model, it is difficult to force the ternarized model to emulate the exact feature representation of the floating-point teacher due to the former's limited expressive capacity.

% It is crucial to highlight the essential role of outliers in knowledge distillation, underscoring the importance of considering them for achieving optimal solutions.

% In the context of ternarization methods in LLMs, it is important to address the challenges encountered in previous approaches. These challenges include the presence of asymmetric outliers and non-zero means in LLMs. Additionally, extreme low-bit quantization negatively impacts feature preservation by increasing variance while limiting outlier values. The mismatch between the distributions of the ternarized model and the floating-point model is another significant obstacle, especially due to the magnitude difference of outliers. This discrepancy renders the most common used MSE loss in knowledge distillation useless in achieving alignment. Furthermore, it is worth noting the essential role of outliers in knowledge distillation, emphasizing the need to consider them for optimal solutions. 

Based on the above discoveries, we propose two simple but effective methods: Dual Learnable Ternarization (DLT) and Outlier-Friendly Feature Knowledge Distillation (OFF). DLT is a custom ternary quantizer for weights with weird distribution in LLMs, enabling both scale and shift learnability. To recover semantic information from the original models, we introduce Outlier-Friendly Feature Knowledge Distillation (OFF). OFF aims to maximize the mutual information between the floating-point and quantized models while leveraging the insensitivity of cosine similarity to outliers, thereby diminishing training instability.
%which leverages the insensitivity of cosine similarity to outliers to mitigate training instability. 
TernaryLLM surpasses both previous post-training and quantization-aware methods on standard NLP benchmarks, including text generation and zero-shot tasks, using models from the OPT and LLaMA families.
Specifically, for one of the most powerful open-source models, LLaMA-3, our approach (W1.58A16) outperforms the previous quantization-aware training method (W2A16) by 5.8 in terms of average perplexity and by 8.2 in terms of average accuracy on zero-shot tasks.

Our contributions can be summarized as follows:
\begin{itemize}
    \item We observe in group-wise quantization, weights in the same group are asymmetric distributed. This phenomenon motivates us to propose Dual Learnable Ternarization (DLT) which enables learnable scales and shifts.
    
    % \item We found that cosine similarity, due to its incorporation of semantic information and insensitivity to outliers, is more suitable as the feature distance metric for LLMs. We further propose Outlier-Friendly Feature Knowledge Distillation (OFF).
    \item To recover semantic information from the original models, we introduce Outlier-Friendly Feature Knowledge Distillation (OFF), which utilizes the insensitivity of cosine similarity to outliers, helping to prevent training instability.
    
    \item We conducted experiments on text generation and zero-shot tasks using models from the OPT and LLaMA families. Our method outperforms both previous post-training and quantization-aware methods, even with fewer bits and a larger group size. 
\end{itemize}

\section{Related Work}
\subsection{LLM Quantization}
Quantization has found extensive application in accelerating models during inference \cite{Quantization_Jacob,Adaround,Brecq}. In the current era of burgeoning LLMs development, quantization has become widely employed for LLMs as well \cite{outlier_suppresion,Smoothquant,AWQ,GPTQ,LLM-QAT}. Based on whether activations are quantized, it can be classified into weight-only quantization and weight-activation quantization.

\textbf{Weight-Activation Quantization.}
Weight-activation quantization quantizes both floating-point weights and activations into low-bit integers. Much research has focused on the extremely large outliers in activations and designed approximate methods to alleviate this problem \cite{Smoothquant,outlier_suppresion,outlier_surpression_plus}. For instance, SmoothQuant \cite{Smoothquant} migrates the distribution imbalance from activations to weights and enables 8-bit weight, 8-bit activation (W8A8) quantization for LLMs. However, it is still challenging to quantize both weights and activations into low-bit, e.g., W4A4.

\textbf{Weight-Only Quantization.} Weight-only quantization only quantizes weights into low-bit while leaving activations floating-point. In the context of LLMs, the primary memory consumption results from model weights \cite{Ominiquant, GPTQ}. Weight-only quantization gains significant speedup and enables inference on consumer-level GPUs. However, in weight-only quantization, there is a need to dequantize the quantized weights to floating-point in real-time before computing with floating-point activation. Therefore, while weight-only quantization primarily benefits from reduced memory access, it suffers from slower computation processes. Fortunately, this problem can be naturally solved by pushing the quantization bit to ternary or binary, providing advantages of both reduced memory consumption and energy-saving floating-point additions \cite{Energy}. Previous work PB-LLM \cite{PB-LLM} only succeeds in partially quantizing weights to 1-bits and leaving salient weights at 8-bits. To distinguish between different data types, it incurs non-negligible overheads. In this paper, we aim to realize fully ternarized LLMs, relying on custom ternary quantization and Knowledge Distillation.

% \textbf{Quantization-Aware Training.} Weight-only quantization only quantizes weights into low-bit while leaving activations floating-point. In the context of LLMs, the primary memory consumption results from model weights \cite{Ominiquant, GPTQ}. Weight-only quantization gains significant speedup and enables inference on consumer-level GPUs. However, in weight-only quantization, there is a need to dequantize the quantized weights to floating point in real-time before computing with floating-point activation. Therefore, while weight-only quantization primarily benefits from reduced memory access, it suffers from slower computation processes. Fortunately, this problem can be naturally solved by pushing the quantization bit to ternary or binary, providing advantages of both reduced memory consumption and energy-saving floating-point additions \cite{Energy}. Previous work PB-LLM \cite{PB-LLM} only succeeds in partially quantizing weights to 1-bits and leaving salient weights at 8-bits. To distinguish between different data types, it incurs non-negligible overheads. In this paper, we aim to realize fully ternarized LLMs, relying on custom ternary quantization and Knowledge Distillation.

\subsection{Knowledge Distillation}

Knowledge distillation (KD) was initially proposed in \cite{KD_Hinton} to transfer knowledge from the logits of teacher models to student models. Later, feature distillation has been proposed to leverage information from hidden layers \cite{FitNets} instead of the output layer. In model quantization, particularly in low-bit settings, KD has been widely used to improve performance and narrow the performance gap between floating-point models and quantized models.

However, directly applying previous knowledge distillation methods in LLMs faces new challenges. The presence of outliers in features makes the previous KD method based on Mean Squared Error ineffective. It is important to recognize the crucial role of outliers in knowledge distillation, highlighting the necessity of fully utilizing them for optimal solutions while also avoiding their negative impact on training stability.

% The previous feature knowledge distillation metric, mean squared error (MSE), is too sensitive to significantly large outliers, making the training process unstable. Furthermore, we observe that these outliers play a crucial role in knowledge distillation; therefore, simply ignoring them leads to a suboptimal solution. 

\section{Background}
\textbf{Notification}. Transformers, including LLMs, are composed of a sequence of encoder or decoder layers. The notation $L$ is used to represent the layer number. For the $l$-th  decoder (or encoder) layer, we denote the input as $H_l$ and the output as $H_{l+1}$.

A linear layer in Transformer is defined as $Y = W \cdot X$, where $W \in \mathbb{R}^{C_o \times C_i}$ and $X \in \mathbb{R}^{C_i \times T}$. Here, $C_o$ and $C_i$ denote the output channel  and input channel, respectively, and $T$ represents the sequence length. To simplify, the number of elements in $W$ is denoted as $N = C_o \times C_i$.

\begin{figure}[t] %H为当前位置，!htb为忽略美学标准，htbp为浮动图形
    \centering %图片居中
    \includegraphics[width=1.0\textwidth]{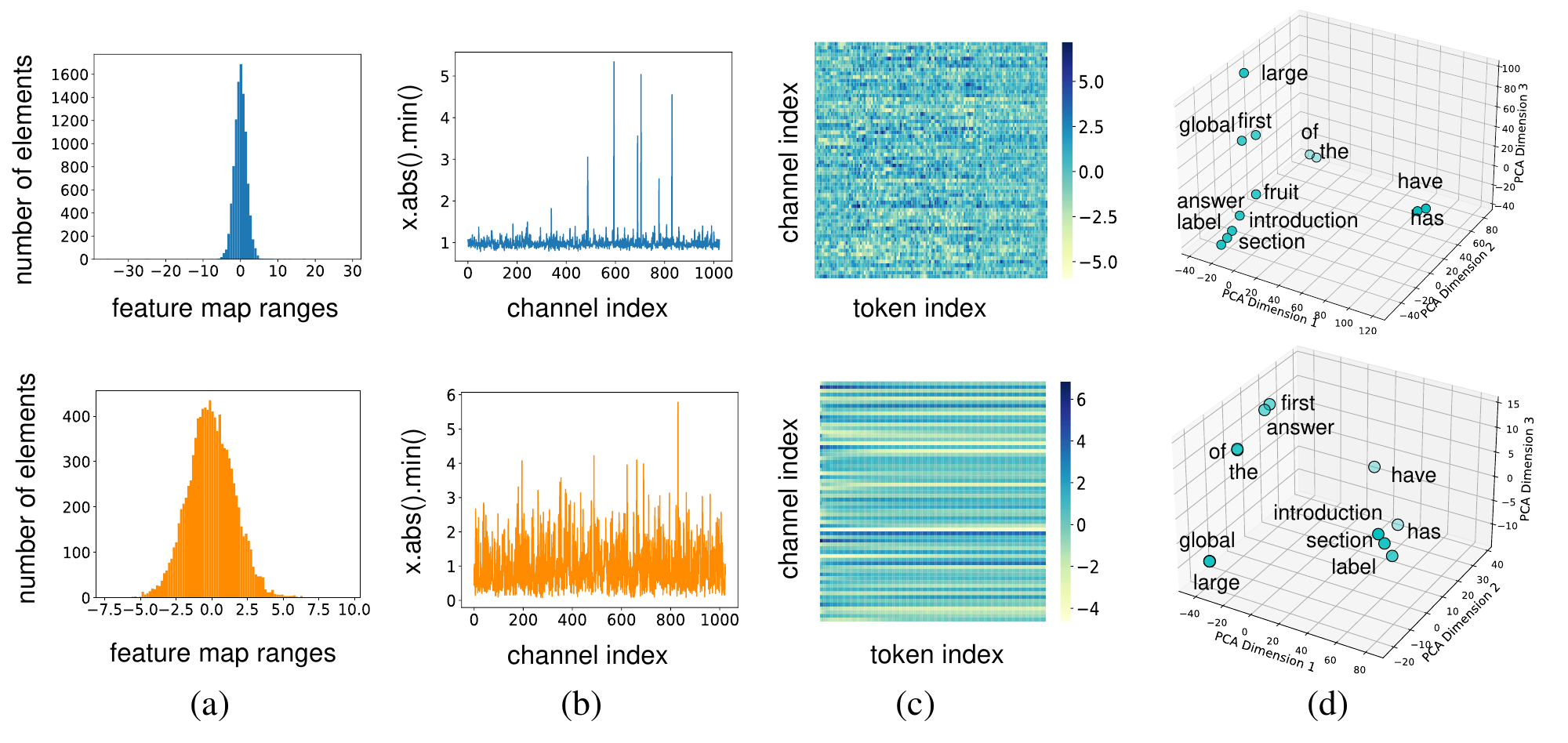} %插入图片，[]中设置图片大小，{}中是图片文件名
    \caption{
 An example of the features in the 23rd decoder layer to illustrate the problems incurred by extreme low-bit quantization. 
 The first and second lines correspond to the float-point and quantized models, respectively. Extreme low-bit quantization leads to severe information loss in pretrained LLMs, including a narrowed feature representation range (Figure~\ref{challenge2_fig} (a)), loss of prominence in dominant channels (Figure~\ref{challenge2_fig} (b)), and disruption of the semantic clustering of related words (Figure~\ref{challenge2_fig} (c) and (d)).
 % (a) Feature magnitude of float-point and quantized models. (b) Dominant channels of features. (c) Visualization of the activation patterns. (d) Word embedding in decode layer. More visualizations are provided in the Appendix.
 } 
    \label{challenge2_fig} %用于文内引用的标签
\end{figure}

\textbf{Weight Ternarization.} Ternarization converts floating-point weights in a model into three values. To project the $i$-th element of $W$, denoted as $W_i$, to $T_i \in \{-1, 0, 1\}$, a parameter threshold $\Delta$ is used:
\begin{equation}
\label{eq_ternary_w}
    \begin{aligned}
        T_i=\begin{cases}
  -1, &\quad \text{ if } W_i< -\Delta \\
  0, &\quad\text{ if } |W_i|\leq  \Delta \\
  +1, &\quad\text{ if } W_i> \Delta
\end{cases}
    \end{aligned}    
\end{equation}
To enhance performance, TWN \cite{TWN} introduces a scaling factor $\alpha$ to estimate the magnitude of the original weights:
\begin{equation}
    \label{mse_target}
    \begin{aligned}
        \min_{\alpha}\sum_i^{N}||W_i - \alpha \cdot  T_i||^2
    \end{aligned}    
\end{equation}

The exact solution for $\Delta$ and $\alpha$ is time-consuming. As an alternative method, they approximate $\Delta$ as:
\begin{equation}
    \begin{aligned}
        \Delta=\frac{0.7\cdot \sum_i^{N} |W_i|}{N}
    \end{aligned}
\end{equation}
After determining $W^t$, the optimal scale factor $\alpha$, which minimizes \eqref{mse_target}, can be obtained as follows:
\begin{equation}
\label{eq_twn_alpha}
    \begin{aligned}
        \alpha^*=\frac{ \sum_i^{N} T_i \cdot W_i}{ 
        \sum_i^{N}{ |T_i|} }
    \end{aligned}
\end{equation}

% \begin{figure}[t] %H为当前位置，!htb为忽略美学标准，htbp为浮动图形
%     \centering %图片居中
%     \includegraphics[width=1.0\textwidth]{figures/architecture.pdf} %插入图片，[]中设置图片大小，{}中是图片文件名
%     \caption{This is a $\tau$} %最终文档中希望显示的图片标题
%     \label{Fig.main1} %用于文内引用的标签
% \end{figure}%结束环境

% \begin{figure*}[t]
% \vskip 0.2in
% \begin{center}
% % \centerline{\includegraphics[width=\columnwidth]{figures/asym_weight.pdf}}
% \centering
%  \vspace{-20pt} % 设置垂直间距，可以根据需要调整数值
%  \subfigure[Boxplot]
%  {\includegraphics[width=0.45\textwidth]{figures/boxplot.pdf}}
%  \subfigure[Histogram]{\includegraphics[width=0.45\textwidth]{figures/channel_35_delta.pdf}}
 
% \caption{Example of weight Distribution of OPT-125M. (a) The Weights contain asymmetric outliers (represented by circles) and exhibit non-zero means (horizontal lines within the boxes).
% (b) Representing values within the range $[-\Delta, \Delta]$ as zero, instead of utilizing a learnable parameter, is considered suboptimal. }
% \label{fig_weight_asym}
% \end{center}
% \vskip -0.2in
% \end{figure*}

\section{Challenges of Ternarizing LLMs}
\label{sec_difficulty}

\begin{wrapfigure}{r}{0.6\textwidth} % 'l' 表示图片在左边，'0.4\textwidth' 表示图片占页面宽度的 40%
    \centering
    \includegraphics[width=\linewidth]{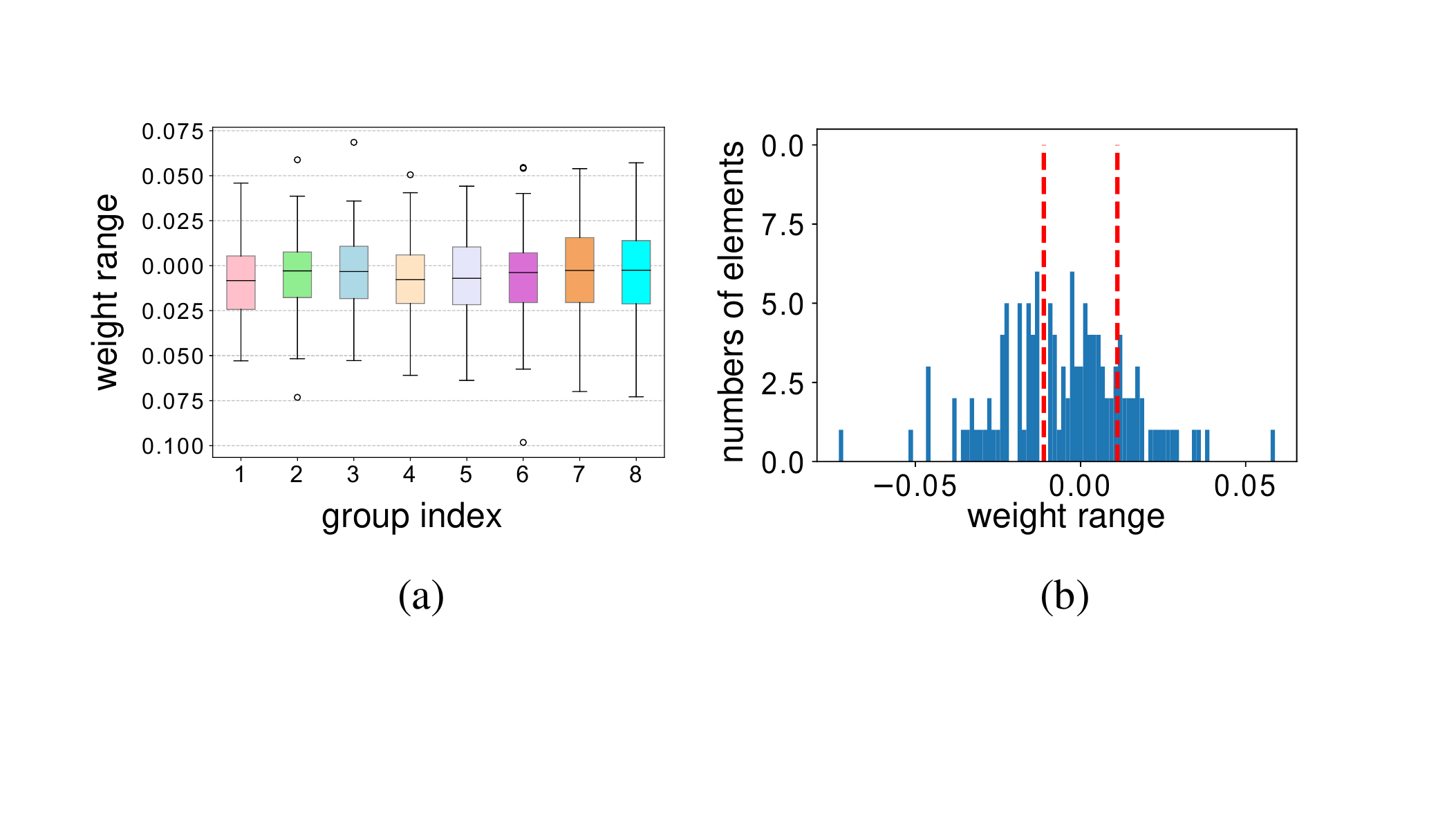} % 替换为你的图片文件名
    \caption{The weights in certain groups display noticeable asymmetric outliers and a non-zero mean distribution. }
    \label{fig_weight_asym}
 \end{wrapfigure}

In this section, we conduct in-depth analyses of the challenges encountered in implementing ternarized LLMs. Firstly, we have observed that the previous weight ternarization method is inadequate for handling asymmetric outliers and non-zero mean distributions. Secondly, extremely low-bit quantization results in substantial information loss, necessitating feature knowledge distillation. Based on preliminary experiments, we have found it difficult to directly align features between quantized and original models due to their different expressive capacities.

\textbf{Challenge 1. For group-wise quantization, weights in certain groups display noticeable asymmetric distribution with non-zero mean.}
For transformers, especially LLMs, Outlier Surpression+ \cite{outlier_surpression_plus} has identified asymmetric behavior among channels in \textbf{features}. 
This paper finds a similar phenomenon in \textbf{weights}.
We illustrate this observation with the query weights in the OPT model. In Figure~\ref{fig_weight_asym}(a), weights in some groups exhibit a negative mean but significantly positive outliers, while the others show the opposite behavior. Previous methods, such as TWN, lead to a suboptimal trade-off between clamping error and round error on both sides: the scale factor in Equation~\eqref{eq_twn_alpha} may be too large for negative values but too small for positive ones. Moreover, in Figure~\ref{fig_weight_asym}(b), due to this asymmetry, the mean of values located in intervals $[-\Delta, \Delta]$ is negative,  which leads to those methods to quantize these values to zero being suboptimal.

\textbf{Challenge 2. Extreme low-bit quantization leads to severe loss of information in pretrained LLMs.} The distribution depicted in Figure~\ref{challenge2_fig} highlights the challenges encountered in ternary quantization. Figure~\ref{challenge2_fig} (a) illustrates that the feature representation range is considerably narrowed after ternarization, indicating a constrained expressive power of the quantized model. Channels that were originally dominant, meaning those with larger magnitudes, become overwhelmed by other channels and are no longer prominent, as shown in Figure~\ref{challenge2_fig} (b).
Furthermore, in Figure~\ref{challenge2_fig} (c) and (d), we observe that after pretraining on large datasets, LLMs tend to cluster semantically related words tightly (for example, the words "has" and "have") \cite{LLMwowrdembedding}. However, ternarization destroys this semantic structure, causing different words to be distributed in an out-of-order manner. Therefore, directly employing quantization-aware training on LLMs wastes the pretrained information and may lead to a different convergence point, risking overfitting.

\textbf{Challenge 3. It is difficult to force the ternarized model to emulate the exact feature representation of the floating-point due to the former's limited expressive capacity.} Focused on Challenge 2, we propose the utilization of feature knowledge distillation (Feature KD). Feature KD can introduce more supervised signals from the pretrained model to realign the ternarized model with the floating-point model. However, as mentioned in Figure~\ref{challenge2_fig}, the values of some channels are significantly larger than others. If they are not correctly balanced, the small gradients produced by the less-activated channels can be overwhelmed by the large gradients from the dominant ones. In Figure~\ref{challenge3_fig}, we observe that the previously popular distillation metric, mean square error (MSE), suffers from severe oscillations. This phenomenon arises due to the inherent heterogeneity between the floating-point and ternarized models' features. Unlike the real-numbered weights of the floating-point model, the ternarized model's weights are constrained to a discrete set of three values. Consequently, the latter fails to replicate the exact feature representation achieved by the former.

% To study this problem, we identified a similar phenomenon in feature knowledge distillation . In Figure 3(a), we firstly found previous feature KD metric, mean squared error is too sensitive to  siginifcantly large outliers, making the training process unstable (Figure~\ref{fig_training_loss} (a)). To thoroughly investigate this phenomenon, we try to clamp these large outliers or skipping iteration  an abnormal loss. Specifically, we record the historical loss using exponential moving average:
% \begin{equation}
%     \begin{aligned}
%         \mu=\varepsilon \cdot MSE(H_l^T,H_l^S) +(1-\varepsilon ) \cdot \mu
%     \end{aligned}
% \end{equation}

% Subsequently, if the current loss significantly surpasses the historical loss $\mu$ , we deactivate feature KD in the ongoing iteration. The experimental results are presented in Figure~\ref{fig_training_loss}(b-c). While these approaches can alleviate the impact of outliers to some extent, they fall short of fully exploiting the information within the features. Additionally, they exhibit sensitivity to the choice of hyperparameters.

% \begin{equation}
%     \begin{aligned}
%         \mathcal{L}_{feat}^l=\begin{cases}
%   & MSE(H_l^T,H_l^S), \quad \text{ if } MSE(H_l^T,H_l^S)< \u^l\\
%   & 0, \quad\text{ if } |W_i|\leq  \Delta \\
%         \end{cases}
%     \end{aligned}
% \end{equation}

\begin{figure}[t]
    \centering
    \includegraphics[width=1\linewidth]{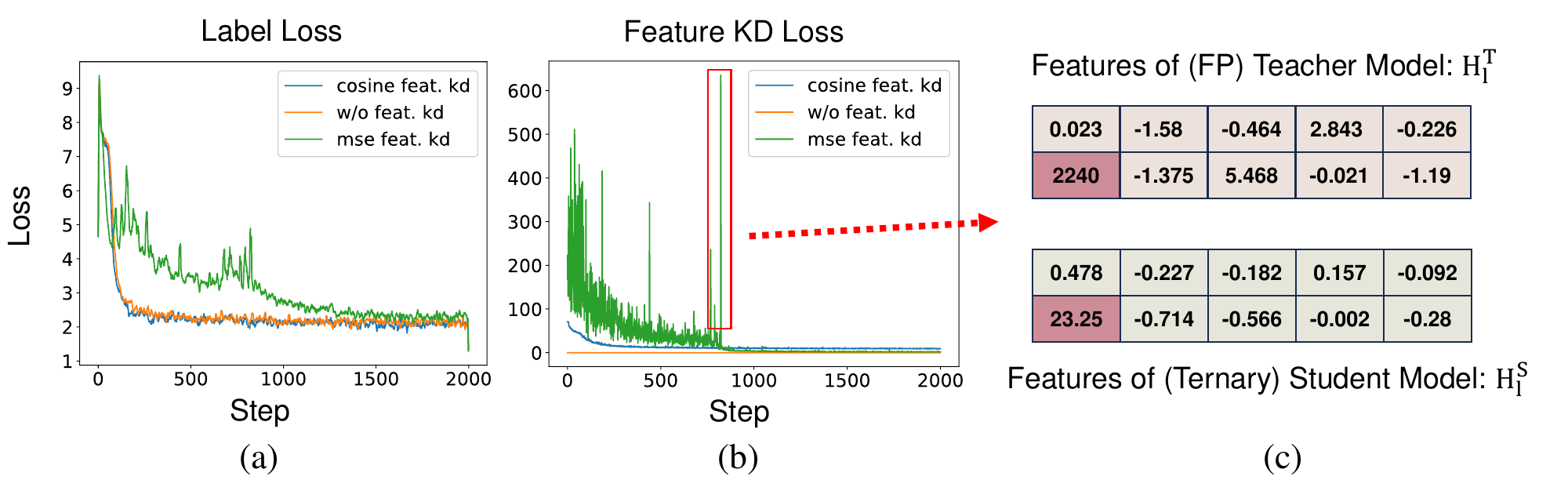}
    \caption{Feature knowledge distillation results for LLaMA-1-7B. Cosine similarity is less sensitive to outliers in features compared to MSE. (a) Ground truth loss of the training. (b) Feature knowledge distillation loss of the training. (c) The reasons for severe oscillations in MSE distillation.
    }
    \label{challenge3_fig}
\end{figure}

% \begin{table*}[bt]
% \caption{C4 perplexity of different methods in OPT and LLaMa models}
% \label{Table_C4}
% \vskip 0.15in
% \begin{center}
% \begin{small}
% % \begin{sc}
%   \begin{tabular}{lllccccl}
%     \hline
%     PPL & WBits &Method& OPT-125M&OPT-1.3B&OPT-2.7B &OPT-6.7B &LLaMA-7B \\
%     \hline
%      FP & 16 & - & 24.60 &14.72 &13.16 &11.74& 7.08 \\
%      \hline
%     W2g64 & 2.5& RTN& 3.9e3 &7.3e3 &1.2e5 &6.3e3&151.43  \\
%      & & GPTQ &  133.51& 31.31& 23.23& 16.24 & 17.71\\
%      &  & AWQ  & 90.19 &27.34& 20.01& 15.20 &  2.8e5\\
%      &  & OmniQuant &  64.01 &23.71 &19.16& 15.44 &  11.78 \\
%      & & PB-LLM &  - & -&-&-&47.09  \\
%      & & DB-LLM &  - & -&-&-& 9.74 \\
%     \hline
%     W2g128 & 2.25 & RTN&  5.0e3 &7.7e3& 3.8e4& 5.2e3 &   1.0e3 \\
%      & & GPTQ & 597.66& 60.88 &33.83& 18.55 &27.71 \\
%      &  & AWQ  &168.35 &38.38& 26.41 &16.48 & 1.9e5 \\
%      &  & OmniQuant &  80.10 &27.33& 21.11& 16.67 & 12.97 \\
%     \hline
%     W2 & 2& OmniQuant & 4.39e4 &68.10  & 43.47&  35.19&   24.89 \\
%     W2 & 2&Ours &32.57  &18.01 &17.49&14.51&9.84\\
%     \hline
%   \end{tabular}
% % \end{sc}
% \end{small}
% \end{center}
% \vskip -0.1in
% \end{table*}

\section{Method}

In Section~\ref{sec_difficulty}, we explore the difficulties in ternarizing LLMs. In this section, based on the above findings, we propose two simple but effective methods to boost performance.

\subsection{Dual Learnable Ternarization}

In Section~\ref{sec_difficulty}, we observed that the weights exhibit significant asymmetric outliers and a non-zero mean distribution. Focusing on this phenomenon, we propose \textbf{Dual Learnable Ternarization (DLT)}, which not only enables learnable quantization scales but also shifts.

Specifically, we first map float weights $W_i$ into ternarized values $T_i$ using threshold $\Delta$ as the same as TWN:
    \begin{equation}
\label{eq_ternary_w_2}
    \begin{aligned}
        T_i=\begin{cases}
  -1, &\quad \text{ if } W_i< -\Delta \\
  0, &\quad\text{ if } |W_i|\leq  \Delta \\
  +1, &\quad\text{ if } W_i> \Delta
\end{cases}
    \end{aligned}    
\end{equation}

Instead of using the mean of weights as $\alpha$ in equation~\ref{eq_twn_alpha}, we set $\alpha$ as a learnable parameter that can be updated during training. To further adapt to the asymmetric distribution, we introduce a learnable shift parameter $\gamma$:
\begin{equation}
\begin{aligned}
    D_i \approx \alpha \cdot T_i +\gamma
\end{aligned}    
\end{equation}

The gradients of parameters of $\alpha$ and $\gamma$ can been computed as follows:
\begin{equation}
\begin{aligned}
    \frac{\partial \mathcal{L}}{\partial \alpha}=
 \sum_{ i:|W_i|<\Delta}   \frac{\partial \mathcal{L}}{\partial D_i } 
\end{aligned}
\end{equation}

\begin{equation}
\begin{aligned}
    \frac{\partial \mathcal{L}}{\partial \gamma}=
 \sum_{ i}   \frac{\partial \mathcal{L}}{\partial D_i} 
\end{aligned}
\end{equation}

By utilizing the Straight-through estimator (STE) \cite{STE}, we can approximate the gradients to float weight $W_i$:
\begin{equation}
\begin{aligned}
    \frac{\partial \mathcal{L}}{\partial W_i}=
    \begin{cases}
           \alpha \cdot  \frac{\partial \mathcal{L}}{\partial D_i}, &\quad \text{ if } W_i > \Delta \\
    1\cdot \frac{\partial \mathcal{L}}{\partial D_i}, &\quad\text{ if } |W_i|\leq  \Delta \\
   -\alpha \cdot  \frac{\partial \mathcal{L}}{\partial D_i}, &\quad\text{ if } W_i> \Delta
    \end{cases}
\end{aligned}
\end{equation}
% \textcolor{red}{Here needs a summarization of our advatanges?}

\textbf{Efficiency Analysis.} By employing DLT, we can replace original floating-point multiplications with just two additions. 
\begin{equation}
    \begin{aligned}
        y=\sum_i D_i \cdot x_i  = \alpha \cdot   \sum_i (T_i \cdot x_i) +  \gamma \cdot \sum_i x_i  
    \end{aligned}
\end{equation}
Because the second summation is independent of weights, the total number of additions in our method is $\mathcal{O}(LC_o C_i + LC_i)$, and the total number of multiplications is $\mathcal{O}(2G)$, where $G$ is the total number of groups.

\subsection{Outlier-Friendly Feature Knowledge Distillation}
Knowledge distillation, as demonstrated by Hinton \cite{KD_Hinton}, has proven to be effective in compressing CNNs and transformer models, particularly in scenarios with extremely low bit precision. In Section~\ref{sec_difficulty}, we identify the necessity of recovering semantic information in features of ternarized models. Here, from the perspective of mutual information, we propose Theorem~\ref{theorem1}, which explains how to maximize the mutual information between features in ternarized and floating-point models using cosine similarity. The proof process can be found in the appendix.

\begin{theorem}
\label{theorem1}
% Assume \(x \in \mathbb{R}^{C_i} \sim \mathcal{N}(0, \sigma_x^2)\), \(W = (w_1^T, w_2^T, \ldots, w_{C_o})^T\) and \(y = Wx\). Let \(W^q\) denotes the ternarization of \(W\) and \(y^q = W^q x\). The objective to maximize the mutual information between \(y\) and \(y^q\) (formally, \(\max I(y, y^q)\)) is achieved when cos(${y^q}, y)=1 $.

Assume \(x \in \mathbb{R}^{C_i} \sim \mathcal{N}(0, \sigma_x^2)\), \(W = (w_1^T, w_2^T, \ldots, w_{C_o})^T\) and \(y = \text{RMSNorm}(Wx)\). Let \(W_q\) denotes the ternarization of \(W\) and \(y_q = \text{RMSNorm} (W_q x)\). The objective to maximize the mutual information between \(y\) and \(y_q\) $I(y, y_q)$ can been achieved by $\mathbb{E}_{p(x)}$cos(${W_qx}, Wx)=1 $.

\end{theorem}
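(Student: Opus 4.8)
The plan is to establish the chain of implications $\mathbb{E}_{p(x)}\cos(W_qx, Wx)=1 \Rightarrow$ (the normalized vectors $y_q$ and $y$ coincide almost surely, i.e. $y_q = y$) $\Rightarrow I(y, y_q)$ is maximized. The third step is essentially definitional: since $y_q$ is a deterministic function of $x$ (hence of $y$'s generating randomness) and lies in the same space, $I(y, y_q) \le H(y)$ (or $\le I(y,y)$ in the differential sense), and equality holds exactly when $y_q$ determines $y$ — which is automatic once $y_q = y$. So the real content is the first implication.

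For that, I would first note that $\cos(W_qx, Wx) \le 1$ pointwise for every $x$, with equality if and only if $W_qx$ and $Wx$ are positive scalar multiples of one another. Then $\mathbb{E}_{p(x)}\cos(W_qx, Wx) = 1$ together with the pointwise bound forces $\cos(W_qx, Wx) = 1$ for $p(x)$-almost every $x$, i.e. $W_qx = \lambda(x)\, Wx$ with $\lambda(x) > 0$ almost surely. The next step is to upgrade this to $\lambda(x)$ being a single constant $\lambda$ independent of $x$: because $x \sim \mathcal{N}(0,\sigma_x^2 I)$ has support on all of $\mathbb{R}^{C_i}$, the relation $(W_q - \lambda(x)W)x = 0$ holding on a full-measure set, combined with linearity, should pin down $W_q - \lambda W = 0$ for a fixed $\lambda$ — here one picks several linearly independent directions $x$ and uses that the scalar must be consistent across them. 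Finally, applying RMSNorm kills the positive scalar $\lambda$: $\mathrm{RMSNorm}(W_qx) = \mathrm{RMSNorm}(\lambda Wx) = \mathrm{RMSNorm}(Wx)$ since RMSNorm is invariant to positive rescaling, giving $y_q = y$.

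I would present the mutual-information half as follows: by the data-processing viewpoint, $I(y,y_q)$ is maximized over all choices of the ternarization $W_q$ precisely when $y_q$ retains all information about $y$, which given that both are functions of the shared input $x$ means $y = y_q$ as random variables; then invoke the implication just proved to conclude that $\mathbb{E}_{p(x)}\cos(W_qx,Wx)=1$ is a sufficient (and, modulo the rescaling freedom, also necessary) condition. It is worth stating explicitly that the Gaussian assumption on $x$ and the use of RMSNorm are what make the cosine the ``right'' surrogate: RMSNorm discards magnitude, so only directional agreement matters, and the cosine is exactly the directional-agreement functional.

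The main obstacle I anticipate is the step from ``$\cos = 1$ almost everywhere'' to ``$W_q = \lambda W$ for a global constant $\lambda$'': one must rule out the possibility that the proportionality constant $\lambda(x)$ genuinely varies with $x$ while still keeping each $W_qx$ parallel to $Wx$. This requires exploiting the full-dimensional support of the Gaussian and a short linear-algebra argument (for instance, evaluating at a basis $e_1,\dots,e_{C_i}$ forces each column of $W_q$ to be the corresponding column of $W$ times a possibly-different constant, and then evaluating at $e_i + e_j$ forces those constants to agree) — and one should be careful that this argument needs the rows $w_k$ of $W$ to be nonzero, which holds generically but may warrant a remark. A secondary subtlety is making the mutual-information optimality claim precise (differential entropy / measure-theoretic conditioning for the continuous $y$), which I would handle by working with $I(y,y_q)\le I(y,y) = H(y)$ and noting $y_q=y$ attains it, rather than trying to characterize all maximizers exhaustively.
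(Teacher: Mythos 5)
Your proposal is correct and reaches the same endpoint as the paper --- that $\mathbb{E}_{p(x)}\cos(W_qx,Wx)=1$ forces $y_q=y$ almost surely, whence the mutual information attains its maximum --- but the mechanics differ at both stages. For the first stage, the paper never argues pointwise: it computes $\mathbb{E}_{p(x)}\|y-y_q\|_2^2$ directly, expanding $\|z-z_q\|_2^2 = \|z\|^2+\|z_q\|^2-2\|z\|\|z_q\|\cos(\cdot,\cdot) = 2-2\cos(\cdot,\cdot)$ for the unit vectors $z=Wx/\|Wx\|$, $z_q=W_qx/\|W_qx\|$, and concluding the expected squared distance is zero when the expected cosine is one. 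Your route (pointwise $\cos\le 1$ plus equality in expectation forces $W_qx\parallel Wx$ a.e., and RMSNorm's scale invariance kills the positive factor) is equivalent but makes an unnecessary detour: the globalization step establishing $W_q=\lambda W$ for a single constant $\lambda$ is superfluous, since RMSNorm normalizes each realization separately, so a sample-dependent $\lambda(x)>0$ already gives $z_q=z$ pointwise. For the second stage, the paper runs the chain $I(y,y_q)=H(y)-H(y-y_q\mid y_q)\ge H(y)-H(y-y_q)$ and then invokes the Gaussian maximum-entropy bound (its Theorem 2) with the vanishing covariance of $y-y_q$; your shortcut $I(y,y_q)\le I(y,y)=H(y)$ with equality at $y_q=y$ is more elementary and avoids that lemma, though both versions share the same differential-entropy degeneracy (a point-mass difference has entropy $-\infty$), which you at least flag explicitly and the paper does not.
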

Utilizing cosine similarity to maximize the mutual information between floating-point model and quantized model, we introduce a more suitable feature KD method termed \textbf{Outlier-Friendly Feature KD (OFF)} for LLMs. Specifically, we compute the cosine similarity between each token of teachers and students individually, then aggregate the total $T$ tokens to compute the feature KD loss for the current layer. Subsequently, we aggregate the first $L'$ layers to compute the total feature KD loss.
% Thus, utilizing cosine similarity as the distance metric, we propose a more suitable feature KD method called \textbf{Outlier-friendly Feature KD (OFF)} for LLMs. Specifically, we calculate the cosine similarity between the $i$-th token of teachers and students individually and subsequently summarize the total $T$ tokens as the feature KD loss of the current layer. We then summarize the first L' layers as the total feature KD loss.
\begin{equation}
    \begin{aligned}
\text{Cosine Similarity}(H^{Teach}\textit{,}H^{Stu}) = \sum_i^T \frac{ {h_i^{Stu} }^T \cdot h_i^{Teach} }{\|h_i^{Teach}\| \cdot \|h_i^{Stu}\|}
    \end{aligned}    
\end{equation}

% For the entire model, the feature KD loss is computed as the summation across all layers:
% \vspace{-1em} % 增加1个文本行高度的垂直间距
% \begin{equation}
%     \begin{aligned}
%         \mathcal{L}_{feat}=\sum_l^{L}1-\text{Consine Similairty}(H_l^{T}\textit{,}H_l^{S})
%     \end{aligned}
% \end{equation}

\begin{equation}
    \mathcal{L}_{feat}  = \sum_l^{L'} \text{Cosine Similarity}(H^{Teach}_l \textit{,} H^{Stu}_l)
\end{equation}

 The advantage of using cosine similarity as the distance metric is two folds: 
 % (a) it can force the student models to recover semantic (b) \textcolor{red}
 (a) By maximizing cosine similarity, the quantized model restores the semantic information of the original model. This ensures the quantized model captures the essential features learned by the floating-point model, leading to better generalization performance on unseen data. (b) Cosine similarity is scale-invariant, meaning it is not affected by the magnitude of the vectors but only by their direction. This property guarantees the distillation process's robustness, even in the presence of numerous outliers in the floating-point model.

% \begin{figure}[h]
% \vskip 0.2in
% % \captionsetup[subfigure]{labelformat=empty} 
% \begin{center}
% % \centerline{\includegraphics[width=\columnwidth]{figures/asym_weight.pdf}}\captionsetup[subfigure]{labelformat=empty}
% \centering
% \includegraphics[width=0.45\textwidth]{figures/cosine_vs_mse.pdf} % 替换为你的图片文件名
% \caption{An example to demonstrate the difference between cosine similarity and MSE. The cosine similarity is inherently less sensitive to absolute values and focuses more on relative values }
% \label{fig_cosine_vs_mse}
% \end{center}
% \vskip -0.2in
% \end{figure}

\textbf{Logits Knowledge Distillation.}
Besides the feature KD, we also distill
knowledge in the last layer by
 minimizing the soft cross-entropy (SCE) between
 quantized student logits $Z^{Stu}$ and teacher logits $Z^{Teach}$, i.e.,
\begin{equation}
    \begin{aligned}
        \mathcal{L}_{logits}=\text{SCE}(Z^{Stu},Z^{Teach})
    \end{aligned}
\end{equation}

\textbf{Objective Function.}
The overall objective function in the training process of TernaryLLM is as follows:
 \begin{equation}
     \begin{aligned}
         \mathcal{L}_{total} =\mathcal{L}_{label} +  \epsilon \cdot \mathcal{L}_{logits} + \delta \cdot \mathcal{L}_{feat} 
     \end{aligned}
 \end{equation}
Here, $\epsilon$ and $\delta$ represent the loss balancing scales, which can be found in the training details section. Additionally, $\mathcal{L}_{label}= SCE(Z^{Stu}, Z^{label})$ denotes the cross-entropy between $Z^{Stu}$ and ground truth $Z^{label}$.

\section{Experiments}
In this section, we evaluate our methods on both language generation and 
zero-shot  zero-shot tasks with OPT \cite{OPT} models and LLaMA
\cite{LLaMa} family models.

\begin{table}[t]
    \small
    \centering
    \caption{Evaluation results of weight-only quantization on the \textbf{LLaMA-3-8B} model. \#W indicates weight bits, \#G indicates group size, and ``-'' denotes per-channel quantization.}
    \label{llama-3-result}
    \setlength{\tabcolsep}{.95mm}
    % \resizebox{\linewidth}{!}
    {
    \begin{tabular}{lllccccccccc}
        \toprule
        \multirow{2}{*}{\textbf{Method}} & 
        \multirow{2}{*}{\textbf{\#W }} & 
        \multirow{2}{*}{\textbf{\#G}}&
        \multicolumn{3}{c}{\textbf{PPL$\downarrow$}} & 
        \multicolumn{6}{c}{\textbf{Zero-Shot$\uparrow$}} 
        \\
        \cmidrule(lr){4-6} \cmidrule(lr){7-12} 
        ~ & ~ &   ~ & \textbf{WikiText2} & \textbf{C4} & \textbf{PTB} & \textbf{PIQA} &\textbf{ARC-e} & \textbf{ARC-c} &\textbf{HellaSwag} & \textbf{Wino} & \textbf{Avg.} \\ 
        \midrule
        \textbf{LLaMA-3} & 16  & - & 6.1 & 9.2 & 10.6 & 79.9 & 80.1 & 50.4 & 60.2 & 72.8 & 68.6 \\
        \midrule
        
        \multirow[t]{2}{*}{RTN} 
         & 2  & 128 & 1.9E3 & 2.5E4 & 1.8E4  
         & 53.1 & 24.8 & 22.1 & 26.9 & 53.1 & 36.0 \\
         & 2  & - & 2.7E6 & 7.4E6 & 3.1E6  
         & 53.1 & 24.7 & 21.9 & 25.6 & 51.1 & 35.3 \\

        \multirow[t]{2}{*}{GPTQ} 
         &  2    &  128  &  2.1E2  &  4.1E4  &  9.1E2 %207.61 908.80
         & 53.9 & 28.8 & 19.9 & 27.7 & 50.5 & 36.2  \\
         &  2    &  -  &  5.7E4  &  1.0E5  &  2.7E5 
         & 52.8 & 25.0 & 20.5 & 26.6 & 49.6 & 34.9\\

        \multirow{1}{*}{AWQ} 

        & 2 & -  & 8.2E5 & 8.1E5 & 9.0E5 & 55.2 & 25.2 & 21.3 & 25.4 & 50.4 & 35.5 \\
    
        \multirow{1}{*}{QuIP} 
 
        & 2 &  -  & 85.1 & 1.3E2 & 1.8E2 & 52.9 & 29.0 & 21.3 & 29.2 & 51.7 & 36.8 \\ % 131.33 178.27
        \midrule

        DB-LLM & 2& 128 & 13.6& 19.2 & 23.8 & 68.9 & 59.1 & 28.2 & 42.1 & 60.4 & 51.8 \\

        \multirow[t]{2}{*}{PB-LLM}
        & 2 & 128 & 24.7 & 79.2 & 65.6  
        & 57.0 & 37.8 & 17.2 & 29.8 & 52.5 & 38.8\\
        &  1.7  &   128  &  41.8  &  2.6E2  &  1.2E2 % 261.87 117.48
        & 52.5 & 31.7 & 17.5 & 27.7 & 50.4 & 36.0 \\
        % \midrule

        %\rowcolor[gray]{0.9}
        %\multirow[t]{2}{*}{Our}
        % & \textbf{1.58} & 128 & 13.8 & \textcolor{red}{TODO} & \textbf{20.6}  
        % & \textbf{74.4} & \textbf{59.7}  & \textbf{34.3} & \textbf{62.7} & \textbf{62.5} & \textbf{58.7}  \\

         \rowcolor[gray]{0.9}
        Our & \textbf{1.58} & - & \textbf{11.2} & \textbf{13.4}& \textbf{16.3}& \textbf{73.7}& \textbf{61.2}& \textbf{36.4}& \textbf{63.9}& \textbf{65.0}& \textbf{60.0}\\

        \bottomrule
        
    \end{tabular}}
\end{table}

\begin{table}[t]
    \small
    \centering
    \caption{C4 and WikiText2 perplexity  $\downarrow$ of different methods in OPT models. \#W indicates weight bits, \#G indicates group size, and ``-'' denotes per-channel quantization.}
    \label{opt-result}
    \setlength{\tabcolsep}{.95mm}
    \resizebox{\linewidth}{!}
    {
    \begin{tabular}{lllcccccccc}
    \toprule
    
       \multirow{2}{*}{\textbf{Method}} & 
        \multirow{2}{*}{\textbf{\#W }} & 
        \multirow{2}{*}{\textbf{\#G}} & \multicolumn{2}{c}{\textbf{OPT-125M}} &  \multicolumn{2}{c}{\textbf{OPT-1.3B}} &  \multicolumn{2}{c}{\textbf{OPT-2.7B}}  &  \multicolumn{2}{c}{\textbf{OPT-6.7B}}\\
    \cmidrule(l){4-5} \cmidrule(l){6-7} 
\cmidrule(l){8-9} \cmidrule(l){10-11}
 & & & \textbf{WikiText2} & \textbf{C4} & \textbf{WikiText2} & \textbf{C4} & \textbf{WikiText2} & \textbf{C4} & \textbf{WikiText2} & \textbf{C4}  \\

    \midrule
     FP & 16 & - & 27.65& 24.60 &14.63 &14.72&12.47 &13.16&10.86 &11.74 \\
     \midrule 
     
   \multirow[t]{2}{*}{RTN}  & 2& 64 & 7.0e3& 3.9e3 &1.0e4& 7.3e3 &19.3e4&  1.2e5 &7.6e3 &6.3e3\\
    & 2& 64 & 7.0e3 & 5.0e3 & 1.0e4 &7.7e3& 19.3e4 &3.8e4& 7.6e3 &5.2e3\\

    \multirow[t]{2}{*}{GPTQ} & 2 & 64& 204.40 & 133.51& 49.58 &31.31& 29.37 &23.23& 16.81 &16.24 \\
    & 2&128& 597.66  & 597.66& 115.16 &60.88 & 61.59&33.83& 20.18 &18.55\\

  \multirow[t]{2}{*}{AWQ} & 2 & 64& 124.18 & 90.19 & 29.78 &27.34&  20.64&20.01& 14.63 &15.20 \\
    & 2&128& 251.84 &168.35& 47.97 &38.38& 28.50 &26.41 & 16.20 &16.48\\
% &2 &64 & 62.56 &64.01  & 21.40 &23.71& 16.76 &19.16& 13.57  &15.44\\
    \multirow[t]{2}{*}{OmniQuant}& 2 & 128&  75.43 & 80.10 &  23.95 &27.33& 18.13 &21.11& 14.43 &16.67 \\
    &2&-& 5.18e4 & 4.39e4 & 48.22  &68.10& 31.04  & 43.47& 22.77  &35.19\\

     \rowcolor[gray]{0.9}
    Our & \textbf{1.58} & -   & \textbf{39.92} &\textbf{32.57} & \textbf{18.51}  &\textbf{18.01} & \textbf{17.98} &\textbf{17.49}& \textbf{13.81} &\textbf{14.51} \\
    \bottomrule
  \end{tabular}}
\end{table}

\begin{table}[t]
    \small
    \centering
    \caption{Evaluation results of weight-only quantization on the \textbf{LLaMA-1-7B} and \textbf{LLaMA-2-7B} model. \#W indicates weight bits, \#G indicates group size, and ``-'' denotes per-channel quantization.}
    \label{llama-2-result}
    \setlength{\tabcolsep}{.95mm}
    \resizebox{\linewidth}{!}
    {
    \begin{tabular}{lllccccccccc}
        \toprule
        \multirow{2}{*}{\textbf{Method}} & 
        \multirow{2}{*}{\textbf{\#W }} & 
        \multirow{2}{*}{\textbf{\#G}}&
        \multicolumn{3}{c}{\textbf{PPL$\downarrow$}} & 
        \multicolumn{6}{c}{\textbf{Zero-Shot$\uparrow$}} 
        \\
        \cmidrule(lr){4-6} \cmidrule(lr){7-12} 
        ~ & ~ &   ~ & \textbf{WikiText2} & \textbf{C4} & \textbf{Avg.} & \textbf{PIQA} &\textbf{ARC-e} & \textbf{ARC-c} &\textbf{HellaSwag} & \textbf{Wino} & \textbf{Avg.} \\ 
        \midrule
        \textbf{LLaMA-2} & 16 & - & 5.47 & 6.97 &6.22 & 76.99 & 53.58 & 40.53 & 72.96 & 67.25 & 62.26 \\
        \midrule        
    {GPTQ} & 2 & 64 & 8.97 & 13.25 & 11.11 & 68.39 & 42.13 & 31.91 & 54.64 & 58.64 & 51.14 \\
     AWQ & 2 & 64 & 2.06e5 & 1.54e5 & 1.8e5   & 50.00 & 26.52 & 26.79	& 26.14 & 49.64 & 35.82 \\
    OmniQuant& 2 & 64 & 9.64 & 12.73 & 11.19 & 68.72 & 39.77 & 30.89 & 53.44 & 56.12 & 49.79 \\

    \midrule
     PB-LLM & 2 & 64 & 20.37 & 44.88 & 32.63 & 55.22 & 29.88 & 22.01 & 30.49 & 50.36 & 37.59  \\
     DB-LLM & 2 & 64 & \textbf{7.23} & {9.62} & {8.43} & \textbf{73.18} &{ 45.20} & {33.53}	& {61.98} & \textbf{61.72} & {55.12} \\

         \rowcolor[gray]{0.9}
       \multirow[t]{2}{*}{Our}  & \textbf{1.58} & - &  7.46 & \textbf{9.16} & \textbf{8.31} & 72.47 &{46.46} & 33.44	& {63.84} & 60.93 & {55.42} \\

        \rowcolor[gray]{0.9}
       & \textbf{1.58} & 64 &  7.7& 9.45& 8.57  & 72.68& \textbf{48.06} &\textbf{34.89}& \textbf{63.94}& \textbf{61.72} & \textbf{56.25} \\
 % & \textbf{1.58} & - &  7.46 & \textbf{9.16} & \textbf{8.31} & 72.47 &{46.46} & 33.44	& {63.84} & 60.93 & {55.42} \\
\midrule \midrule

\textbf{LLaMA-1} & 16 & - & 5.68  &  7.08 & 6.38 & 77.37 & 52.53 & 41.38 & 72.99 & 66.85 & 62.22 \\
\midrule
GPTQ & 2 & 64 & 22.10  & 17.71  & 19.91&   59.36 & 32.11 & 25.09 & 35.14 & 49.01 & 40.14 \\
AWQ & 2 & 64 &   2.5e5  & 2.8e5 & 5.3e5&  50.05 & 25.76 & 29.44	& 25.93 & 49.96 & 36.23 \\
OmniQuant & 2 & 64 &  8.91 & 11.79&   10.4 &68.66 & 44.49 & 29.69 & 54.32 & 55.56 & 50.54 \\

\midrule
PB-LLM & 2 & 64 & 20.61 &47.09&33.85  &  55.39 & 34.22 & 24.23 & 31.99 & 52.88 & 39.74 \\
DB-LLM & 2 & 64 &7.59 &9.74&8.67&  {72.14} & {44.70} & {33.62}	& {60.71} & {61.01} & {54.44} \\

 \rowcolor[gray]{0.9}
\multirow[t]{2}{*}{Our} & \textbf{1.58} & - &7.82& 9.52& 8.67& 72.74& \textbf{46.89} & 34.98& 62.40& 59.35& 55.27 \\

 \rowcolor[gray]{0.9}
& \textbf{1.58} & 64 & \textbf{7.48} & \textbf{9.38} & \textbf{8.58} & \textbf{73.06} & 45.49& \textbf{35.15} &\textbf{63.78}&\textbf{62.58} & \textbf{56.01} \\

\bottomrule
 
\bottomrule
    \end{tabular}}
\end{table}

\subsection{Experiment Setup}
\textbf{Dataset.}
Following the previous work \cite{PB-LLM} we use RedPajama \cite{redpajama} as the training dataset. RedPajama is an open-source reproduction of the pre-training data for LLaMA\cite{LLaMa}. This mainly consists of web text (Common Crawl and C4 \cite{C4}) and high-quality sources such as arXiv and Stack Exchange. 

\textbf{Training Details.}
We utilize a full-precision model to initialize ternarized models. We use the AdamW optimizer with zero weight decay to optimize the learnable parameters. The batch size is 16. We use cosine learning rate decay, and the total number of iterations is 10,000 steps. More training details can be found in the Appendix. 

\textbf{Evaluation Tasks.}
 We evaluate our methods on both language generation and zero-shot tasks. We report the perplexity on WikiText2 \cite{wikitext-2}, PTB \cite{PTB} and C4 \cite{C4}. For zero-shot tasks, we provide accuracy on  datasets including PIQA \cite{PIQA}, ARC \cite{ARC}, BoolQ \cite{BoolQ}, HellaSwag \cite{HellaSwag}  and Winogrande \cite{WinoGrande}.

\subsection{Results on Language Generation}
Experiments were conducted on OPT models \cite{OPT} ranging from 125M to 6.7B, as well as on 7B versions of the LLaMA-1, LLaMA-2, and LLaMA-3 models \cite{LLaMa}. In this comparison, we evaluate our method against previous weight-only techniques, including post-training quantization methods such as OmniQuant \cite{Ominiquant}, AWQ \cite{AWQ}, and GPTQ \cite{GPTQ}, as well as quantization-aware training methods such as PB-LLM \cite{PB-LLM} and DB-LLM \cite{DBLLM}. PB-LLM retains significant weights in higher bits, whereas DB-LLM employs two binary matrices to represent the weights. The average bit-width for both methods is 2 bits.
The perplexity results for LLaMA family models are presented in Table~\ref{llama-3-result} and Table~\ref{llama-2-result}. 
Notably, even with an average bit width of only 1.58 , our methods surpass the previous 2-bit quantization-aware methods in average perplexity, for example, 5.2 for LLaMA-3.
The perplexity results for OPT models on the C4 \cite{C4} and WikiText2 \cite{wikitext-2} datasets are presented in Table~\ref{opt-result}. Even with per-channel quantization, our approach outperforms previous group-wise weight-only quantization methods. For example, for OPT-125M, our per-channel ternary method maintains an average PPL of 39.92, while the 2-bit OmniQuant achieves only 75.53 with a group size of 128.

%c4,
\subsection{Results on Zero-Shot Tasks}
In this section, we assess the performance of ternarized models on zero-shot tasks and present the results in Table~\ref{llama-3-result} and Table~\ref{llama-2-result}. Previous work \cite{llama-3-quant} indicates that LLaMA-3 experiences significant degradation at ultra-low bit-widths, highlighting the substantial performance gap that needs to be addressed by the quantization research community. For LLaMA-3-7B, our approach (W1.58A16) outperforms the previous quantization-aware training method DB-LLM (W2A16) by 8.2\% in accuracy, improving the average accuracy from 51.8\% to 60.0\%.
Our methods also surpass previous best results in LLaMA-1 and LLaMA-2 by  1.56\% and 1.13\%, respectively, demonstrating superior preservation of generation capability in LLMs.
  
\subsection{Ablations}
In this section, we present some ablation studies to demonstrate the efficacy of our methods. For this segment, we limit the training to only 2,000 iterations.

\textbf{Dual Learnable Ternarization.}
We conducted experiments to investigate the effects of our DLT method. As shown in Table~\ref{Table_leranble}, the DLT method achieves a 1.49 PPL improvement on the OPT-1.3B model and a 0.89 PPL improvement on the LLaMA-1-7B model. DLT enables $\alpha$ and $\gamma$ to learn values better suited to LLM weights, enhancing the model's expressive capacity at extremely low-bit scenarios.

\begin{table}[h]
  \centering
  \begin{tabular}{lccc}
    \toprule
    \textbf{LLaMA-7B} \textbf{PPL} &\textbf{FP} &\textbf{TWN} & \textbf{DLT} \\
    \hline 
    OPT-1.3B &   14.63 &22.32       & \textbf{20.83}\\
    LLaMA-7B & 5.68 &10.10       & \textbf{9.21} \\
    \bottomrule
  \end{tabular}
  \caption{Comparison of various ternarization methods on OPT-1.3B and  LLaMA-7B model.}
  \label{Table_leranble}
\end{table}

\textbf{Outlier-Friendly Feature KD.}
We conducted experiments to evaluate various feature knowledge distillation methods and their combinations. Consistent with our analysis in Section~\ref{sec_difficulty}, mean square error (MSE) methods often suffer from training instability. We also explored other improved methods based on MSE, such as clamping outliers or disregarding abnormal loss. However, these methods only partially address the issue and involve significant hyperparameter tuning. As shown in Table ~\ref{Table_feature_kd}, Outlier-friendly Feature KD improves performance by 0.65 PPL and 0.57 PPL on Wikitext2 and C4, respectively. Additionally, combining it with logits, KD further enhances performance by 0.77 PPL.

\begin{figure}\CenterFloatBoxes
\begin{floatrow}
\ffigbox[\FBwidth]{%
  \includegraphics[width=0.45\textwidth]{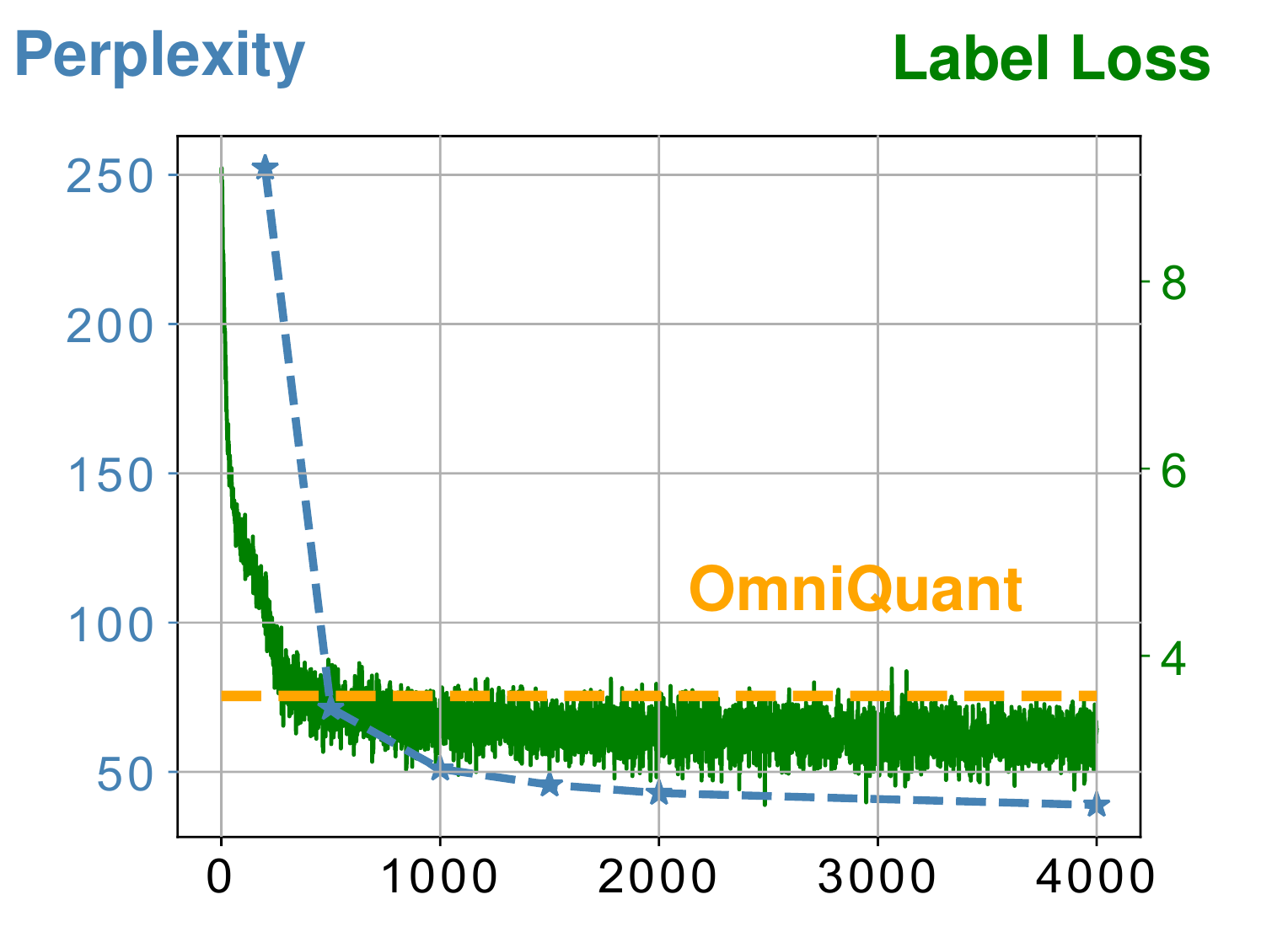}
}{%
  \caption{Training loss and validation perplexity curves. The experiments are conducted on OPT-125M with a group size of 128. Our method surpasses OmniQuant with only 500 steps. }%
  \label{fig_iteration}%
}
\killfloatstyle\ttabbox[\Xhsize]
{    \caption{
Comparison of different knowledge distillation techniques on the LLaMA-1-7B model. OFF and logits KD, either separately or combined, can improve performance.}
  \label{Table_feature_kd}}
{     \begin{tabular}{lcc}
    \toprule
   \textbf{Method} & \textbf{WikiText2} & \textbf{C4} \\
    \hline
    W/O KD & 9.98 &11.56 \\ 
    Logits  &   9.84  & 11.41\\  
    MSE &10.84 &  12.12  \\ 
    MSE + Clamp & 9.48 & 11.14\\
    MSE + Skip  &9.72& 11.39 \\
    OFF &9.33 &  10.99 \\ 
    \midrule
    Logits+OFF  & \textbf{9.21} & \textbf{10.78}  \\
    \bottomrule
  \end{tabular}
    }
\end{floatrow}
\end{figure}

% \begin{table}
%   \centering
%   \begin{tabular}{lll}
%     \hline
%     KD &WikiText2 & C4 \\
%     \hline
%     W/O KD & 9.98 &11.56 \\ 
%     Logits  &   9.84  & 11.41
%  \\  
%     MSE &10.84 &  12.12  \\ 
%     \hspace{0.25cm}+Clamp & 9.48 & 11.14\\
%     \hspace{0.25cm}+Skip  &9.72& 11.39 \\
%     OFF &9.33 &  10.99 \\ 
%     \hline
%     Logits+OFF  &9.21 &10.78  \\
%     \hline
%   \end{tabular}
%   \caption{Experiments were conducted to evaluate various knowledge distillation (KD) methods on the LLaMA-7B model }
%   \label{Table_feature_kd}
% \end{table}

% % \begin{figure}[hb]
% % \vskip 0.2in
% % \captionsetup[subfigure]{labelformat=empty} 
% \begin{center}
% % \centerline{\includegraphics[width=\columnwidth]{figures/asym_weight.pdf}}\captionsetup[subfigure]{labelformat=empty}
% \centering
% \includegraphics[width=0.40\textwidth]{figures/iterartion.pdf}
% \caption{Training loss and validation perplexity curves. }
% \label{fig_iteration}
% \end{center}
% \vskip -0.2in
% % \end{figure}

\textbf{Training Steps.} We conducted experiments on OPT-125M with a group size of 128 to observe the validation perplexity and training loss over steps. Our method demonstrates fast convergence, recovering performance in a few steps, and surpasses the previous post-training quantization method in just 500 steps.

% \begin{figure}[hb]
% \vskip 0.2in
% % \captionsetup[subfigure]{labelformat=empty} 
% \begin{center}
% % \centerline{\includegraphics[width=\columnwidth]{figures/asym_weight.pdf}}\captionsetup[subfigure]{labelformat=empty}
% \centering
% \includegraphics[width=0.40\textwidth]{figures/iterartion.pdf}
% \caption{Training loss and validation perplexity curves. }
% \label{fig_iteration}
% \end{center}
% \vskip -0.2in
% \end{figure}

\section{Conclusion}
In this paper, we introduce TernaryLLM, tailored for precise ternarized Large Language Models. To tackle the decline in accuracy, we propose Dual Learnable Ternarization, which integrates learnable scales and shifts to adapt to the asymmetric weight distribution in group-wise quantization. Additionally, we present Outlier-Friendly Feature Knowledge Distillation (OFF), designed to maximize the mutual information between floating-point and quantized models. This approach also takes advantage of the insensitivity of cosine similarity to outliers, thereby mitigating training instability.
%Additionally, we introduce Outlier-Friendly Feature Knowledge Distillation (OFF), leveraging cosine similarity as the distance metric to recover lost information in ternarized models. 
TernaryLLM not only enhances the compression ratio of parameters but also enables LLM inference with floating-point additions instead of multiplications.
Extensive experiments demonstrate that TernaryLLM outperforms previous extreme low-bit quantization methods on established NLP benchmarks. 
\clearpage
\bibliographystyle{unsrt}  
\bibliography{main}

%%%%%%%%%%%%%%%%%%%%%%%%%%%%%%%%%%%%%%%%%%%%%%%%%%%%%%%%%%%%%%%%%%%%%%%%%%%%%%%
%%%%%%%%%%%%%%%%%%%%%%%%%%%%%%%%%%%%%%%%%%%%%%%%%%%%%%%%%%%%%%%%%%%%%%%%%%%%%%%
% DELETE THIS PART. DO NOT PLACE CONTENT AFTER THE REFERENCES!
%%%%%%%%%%%%%%%%%%%%%%%%%%%%%%%%%%%%%%%%%%%%%%%%%%%%%%%%%%%%%%%%%%%%%%%%%%%%%%%
%%%%%%%%%%%%%%%%%%%%%%%%%%%%%%%%%%%%%%%%%%%%%%%%%%%%%%%%%%%%%%%%%%%%%%%%%%%%%%%
\clearpage

\appendix
\section{Appendix / Supplemental Material}

\begin{center}
\begin{table*}[ht]
\centering
\caption{Training details for different models. We use the same training parameters for LLaMA-1, LLaMA-2 and LLaMA-3. }
\label{table_traing_details}
  \begin{tabular}{lllcccl}
    \toprule
    \textbf{Model} & \textbf{OPT-125M} & \textbf{OPT-1.3B} & \textbf{OPT-2.7B} & \textbf{OPT-6.7B} & \textbf{LLaMA-7B}\\
    \midrule
    Learning rate  &1e-4 &5e-5&1e-4&5e-5&1e-4 \\
    Logits KD loss coefficient & 0.001 &0.001  & 0.001& 0.001&  0.001\\
    Feature KD loss coefficient & 10 &10 & 10 & 10& 5 \\
    Feature KD number layers & 6 &  18&  18 &18 & 18 \\
    \bottomrule
  \end{tabular}
\end{table*}
\end{center}

\subsection{Training Details}

During the fine-tuning process, we optimize the trainable parameters using the AdamW optimizer with zero weight decay. The batch size is set to 16. We implement cosine learning rate decay, gradually decreasing to 0.1 times the initial value. All models undergo training for 10,000 steps, including a 500-step warm-up period. The training is conducted on a single 8-GPU node of AMD INSTINCT MI250. The learning rate for scales and shifts in DLT is set to 0.1 times weight learning rate. Additional parameters are provided in Table~\ref{table_traing_details}.

% \subsection{Results on PTB}
% The results for LLaMA-1-7B are shown in Table~\ref{llama-1-result}.

\subsection{Proof of Theorem 1}
\setcounter{theorem}{0}
\begin{theorem}
\label{theorem1}

Assume \(x \in \mathbb{R}^{C_i} \sim \mathcal{N}(0, \sigma_x^2)\), \(W = (w_1^T, w_2^T, \ldots, w_{C_o})^T\) and \(y = \text{RMSNorm}(Wx)\). Let \(W_q\) denotes the ternarization of \(W\) and \(y_q = \text{RMSNorm} (W_q x)\). The objective to maximize the mutual information between \(y\) and \(y_q\) (formally, \(\max I(y, y_q)\)) can been achieved by $\mathbb{E}_{p(x)}$cos(${W_qx}, Wx)=1 $.
\end{theorem}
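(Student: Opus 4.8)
The plan is to reduce the mutual information objective to a statement about Gaussian random variables, where mutual information is monotone in correlation, and then translate correlation back into cosine similarity via the RMSNorm. First I would note that since $x \sim \mathcal{N}(0,\sigma_x^2 I)$, both $Wx$ and $W_q x$ are jointly Gaussian vectors (linear images of a Gaussian), and the RMSNorm operation rescales each of them. The key simplification is that RMSNorm produces vectors of (essentially) fixed norm, so $y$ and $y_q$ live on a sphere and the only free quantity governing their joint law is the angle between $Wx$ and $W_qx$; equivalently, for each output coordinate $k$, the relevant quantity is the correlation coefficient between $\langle w_k, x\rangle$ and $\langle w_k^q, x\rangle$, which equals $\cos(w_k, w_k^q)$ up to the shared $\sigma_x^2$ factor (the Gram structure of $x$ is isotropic, so $\mathbb{E}[\langle w_k,x\rangle \langle w_k^q,x\rangle] = \sigma_x^2 \langle w_k, w_k^q\rangle$, and the correlation is $\langle w_k, w_k^q\rangle / (\|w_k\|\,\|w_k^q\|) = \cos(w_k, w_k^q)$).

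Next I would invoke the standard fact that for a bivariate Gaussian with correlation $\rho$, the mutual information is $-\tfrac12\log(1-\rho^2)$, which is strictly increasing in $|\rho|$ and is maximized (diverging to $+\infty$) exactly as $|\rho|\to 1$. Applying this coordinatewise (and using that the channels share the same $x$, so aligning every $w_k^q$ with $w_k$ simultaneously maximizes the joint information), $I(y,y_q)$ is maximized precisely when $\cos(w_k, w_k^q) = 1$ for all $k$. The final step is to repackage ``$\cos(w_k^q x, w_k x)$ per coordinate equals one'' as the aggregate condition $\mathbb{E}_{p(x)}\cos(W_q x, W x) = 1$: since cosine similarity is bounded above by $1$, the expectation attains $1$ iff $\cos(W_q x, Wx) = 1$ almost surely, i.e. $W_q x$ and $Wx$ are positively parallel for (almost) every $x$, which forces the row-wise alignment and hence the vanishing quantization ``direction error.'' I would present the implication in the direction the theorem needs: the mutual-information-maximizing configuration is attained by driving $\mathbb{E}_{p(x)}\cos(W_qx, Wx)$ to $1$, which motivates the cosine-similarity distillation loss.

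The main obstacle I anticipate is making the RMSNorm step rigorous: RMSNorm divides by the root-mean-square of the coordinates of $Wx$, which is itself a random quantity, so $y$ is not exactly a fixed-norm vector and not exactly Gaussian after normalization. I would handle this either (i) in the high-dimensional regime where $\mathrm{RMS}(Wx)$ concentrates around a deterministic constant (so RMSNorm is asymptotically just a fixed rescaling, and mutual information is invariant under deterministic invertible maps), or (ii) by observing that mutual information is invariant under the injective deterministic map applied separately to each argument, so $I(y,y_q) = I(Wx, W_qx)$ exactly provided RMSNorm is injective — which is false in general (it collapses the radial direction), but one can argue the radial component carries negligible or no information about $y_q$'s direction in the relevant model. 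The cleanest route is probably (ii) with the explicit remark that RMSNorm is direction-preserving and rank-one-radially-collapsing, so the informative content is exactly the direction of $Wx$, and then the cosine similarity is literally the natural sufficient statistic. A secondary technical point is justifying that maximizing each coordinate's correlation simultaneously is feasible and jointly optimal rather than merely a coordinatewise bound; this follows because the single shared source $x$ makes $\cos(w_k^q,w_k)=1$ for all $k$ equivalent to $W_q x \parallel W x$, a single consistent condition.
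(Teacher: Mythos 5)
Your route is genuinely different from the paper's, and it has gaps that the paper's argument avoids. The paper does not compute any Gaussian mutual information formula in terms of correlation. Instead it argues in one direction only, which is all the theorem claims: if $\mathbb{E}_{p(x)}\cos(W_qx,Wx)=1$ then (since cosine is bounded by $1$) $\cos(W_qx,Wx)=1$ almost surely, so the normalized directions $z=Wx/\|Wx\|$ and $z_q=W_qx/\|W_qx\|$ coincide; because RMSNorm depends only on the direction of its input, $y=y_q$ almost surely, hence $\mathbb{E}\|y-y_q\|_2^2=0$, the covariance of $y-y_q$ vanishes, and by the maximum-entropy property of the Gaussian (the paper's Theorem 2) $H(y-y_q)$ is driven to its minimum, so $I(y,y_q)=H(y)-H(y|y_q)\ge H(y)-H(y-y_q)$ attains its maximal value. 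This completely sidesteps your main anticipated obstacle: there is no need to decide whether $y$ is Gaussian after normalization or whether RMSNorm is injective, because one only needs that equal directions give equal RMSNorm outputs.

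The concrete gaps in your proposal are these. First, the bivariate-Gaussian identity $I=-\tfrac12\log(1-\rho^2)$ applies to the pre-normalization coordinates $\langle w_k,x\rangle$, $\langle w_k^q,x\rangle$, not to the coordinates of $y$ and $y_q$, which are no longer Gaussian after dividing by the random quantity $\|Wx\|$; you flag this but do not close it, and neither of your two proposed fixes is rigorous (RMSNorm is not injective, and the concentration argument changes the statement to an asymptotic one). Second, mutual information between the Gaussian vectors $Wx$ and $W_qx$ does not decompose coordinatewise, and your claim that it is maximized \emph{precisely} when all $\cos(w_k,w_k^q)=1$ is false in degenerate but generic situations: whenever the rows of $W_q$ lie in the row space of $W$ (e.g., $W$ of full column rank), $W_qx$ is already a deterministic function of $Wx$ and $I(Wx,W_qx)=\infty$ regardless of the cosines, so the correlation-monotonicity picture breaks down exactly where the maximum lives. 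What survives of your argument is the sufficiency direction --- $\cos=1$ a.s.\ forces $W_qx$ parallel to $Wx$, hence a deterministic relationship and maximal mutual information --- but establishing that cleanly is precisely the paper's proof, reached without the bivariate-Gaussian detour.
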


\begin{proof}
From the definition of RMSNorm, we have:
\begin{equation}
    \begin{aligned}
        y= a \odot \frac{Wx}{||Wx||} + b
    \end{aligned}
\end{equation}

\begin{equation}
    \begin{aligned}
        y_q= a \odot \frac{W_qx}{||W_qx||} + b
    \end{aligned}
\end{equation}

Set $z=\frac{Wx}{||Wx||}$ and $z_q=\frac{W_qx}{||W_qx||}$, then we can simplify cosine similarity as follows:
\begin{equation}
\begin{aligned}
         \mathbb{E}_{p_(x)}\quad ||y-y_q||_2^2 
     & \leq \mathbb{E}_{p_(x)}\quad  || a \odot (z-z_q )||_2^2  \\ 
    &= \mathbb{E}_{p_(x)} \quad  ||a||_2^2 \cdot ||z- z_q||_2^2 \\ 
    &= \mathbb{E}_{p_(x)}\quad ||a||_2^2 \cdot  (||z||_2^2 +||z_q||_2^ 2-2||z||_2 ||z_q||_2 cos(y,y_q) ) \\
    &=  ||a||_2^2 \cdot (2 - \mathbb{E}_{p_(x)} {cos(y,y_q) }) = 0
\end{aligned}
\end{equation}
Thus, we have $\mathbb{E}_{p(x)} \{(y-y^q) (y-y^q)^T\}_{ij} \leq \mathbb{E}_{p_(x)}||y-y_q||_2^2 = 0$. Then, from the definition of mutual information, 
\begin{equation}
    \begin{aligned}
        I(y, y_q)=H(y) - H(y|y_q) \\
        & =  H(y) - H(y - y_q | y_q ) \\ 
         & \geq  H(y) - H(y - y_q) \\
       & \geq   H(y) - \frac{1}{2} \log(2\pi e)^n  \text{det} (\mathbb{E}_{p(x)} {(y-y^q) (y-y^q)^T}) \\
        & = H(y) 
    \end{aligned}
\end{equation}
The first inequality arises from the condition entropy $H(X|Y) \leq H(X)$, while the second inequality stems from Theorem~\ref{theorem2}.
\end{proof}

\begin{theorem}
\label{theorem2}
    Let the random vector $x \in \mathbb{R}^n$ have zero mean and covariance $K = \mathbb{E}[xx^T]$ (i.e., $K_{ij} = \mathbb{E}[X_iX_j], 1 \leq i, j \leq n$). Then $h(x) \leq \frac{1}{2} \log(2\pi e)n|\mathbf{K}|$, with equality iff $x \sim \mathcal{N}(0,\mathbf{K})$.
\end{theorem}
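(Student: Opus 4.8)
The plan is to prove this classical Gaussian maximum-entropy bound via the non-negativity of relative entropy (KL divergence) between the law of $x$ and the Gaussian with matched moments. Throughout I assume $K \succ 0$ (otherwise $h(x) = -\infty$ and the inequality is trivially satisfied); let $f$ denote the density of $x$ and let $\phi$ be the density of $\mathcal{N}(0, K)$, namely $\phi(u) = (2\pi)^{-n/2}|K|^{-1/2}\exp(-\tfrac{1}{2}u^T K^{-1} u)$.

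First I would invoke the fact that $D(f\|\phi) = \int f \log(f/\phi)\,\geq 0$, with equality if and only if $f = \phi$ almost everywhere (Gibbs' inequality, i.e. non-negativity of KL divergence). Splitting the integrand gives $D(f\|\phi) = -h(x) - \int f \log \phi$, so the entire argument reduces to evaluating the cross term $\int f \log\phi$.

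The heart of the proof --- and the step I expect to carry the real content --- is a moment-matching observation: because $\log\phi(u) = -\tfrac{n}{2}\log(2\pi) - \tfrac{1}{2}\log|K| - \tfrac{1}{2}u^T K^{-1} u$ depends on $u$ only through constants and the quadratic form $u^T K^{-1} u$, the integral $\int f \log\phi$ depends on $f$ only through its first two moments. Since $x$ has zero mean and covariance $K$ by hypothesis, I would compute $\mathbb{E}_f[x^T K^{-1} x] = \mathbb{E}_f[\operatorname{tr}(K^{-1} x x^T)] = \operatorname{tr}(K^{-1} K) = n$, which is exactly the value obtained when $f$ is replaced by $\phi$. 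Hence $\int f \log\phi = \int \phi \log\phi = -h(\phi)$, and substituting back yields $D(f\|\phi) = h(\phi) - h(x)$.

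It then remains to record that $h(\phi) = \tfrac{1}{2}\log\big((2\pi e)^n |K|\big)$, obtained by the same quadratic computation applied to $\phi$ itself. Combining this with $D(f\|\phi)\geq 0$ gives $h(x) \leq \tfrac{1}{2}\log\big((2\pi e)^n|K|\big)$, and the equality characterization is inherited directly from the equality case of KL non-negativity, namely $x \sim \mathcal{N}(0,K)$. The only subtlety to guard against is the degenerate case where $K$ is singular or $x$ admits no density; these are dispatched by the opening reduction, so the main obstacle is really just making the moment-matching identity $\int f\log\phi = \int\phi\log\phi$ rigorous, which the trace computation above settles.
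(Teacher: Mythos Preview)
Your argument is correct and is precisely the standard proof via non-negativity of $D(f\|\phi)$ together with the moment-matching identity $\int f\log\phi=\int\phi\log\phi$. The paper does not actually supply its own proof of this theorem; it simply defers to an information-theory textbook (Cover and Thomas), so there is no alternative approach to compare against --- your write-up is in fact the textbook argument the paper is citing.
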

\begin{proof}
     The proof of this theorem can be found in information theory textbooks \cite{cover1999elements}.
\end{proof}

\subsection{Limitation}
Although TernaryLLM enhances parameter compression ratios and facilitates LLM inference through floating-point additions instead of multiplications, maximizing acceleration in LLM inference via ternarization requires greater hardware support. Previous research has succeeded in hardware accelerator for ternarized convolution networks \cite{eetha2021tilenet, zhu2022tab}. We hope this paper will attract more researchers to focus on customized hardware for ternarized LLMs.

\end{document}